\providecommand{\tabularnewline}{\\}
\numberwithin{equation}{section}
\numberwithin{figure}{section}
\theoremstyle{plain}
\newtheorem{thm}{\protect\theoremname}
\theoremstyle{definition}
\newtheorem{problem}[thm]{\protect\problemname}
\theoremstyle{remark}
\newtheorem{rem}[thm]{\protect\remarkname}
\theoremstyle{plain}
\newtheorem{prop}[thm]{\protect\propositionname}
\providecommand{\problemname}{Problem}
\providecommand{\propositionname}{Proposition}
\providecommand{\remarkname}{Remark}
\providecommand{\theoremname}{Theorem}
\begin{document}
\title{Probabilistic Approach for Detection of High-Frequency Periodic Signals
using an Event Camera}
\author{David El-Chai Ben-Ezra, Ron Arad, Ayelet Padowicz, \\
Israel Tugendhaft}
\maketitle
\begin{abstract}
Being inspired by the biological eye, event camera is a novel asynchronous
technology that pose a paradigm shift in acquisition of visual information.
This paradigm enables event cameras to capture pixel-size fast motions
much more naturally compared to classical cameras. 

In this paper we present a new asynchronous event-driven algorithm
for detection of high-frequency pixel-size periodic signals using
an event camera. Development of such new algorithms, to efficiently
process the asynchronous information of event cameras, is essential
and being a main challenge in the research community, in order to
utilize its special properties and potential.

It turns out that this algorithm, that was developed in order to satisfy
the new paradigm, is related to an untreated theoretical problem in
probability: let $0\leq\tau_{1}\leq\tau_{2}\leq\cdots\leq\tau_{m}\leq1$,
originated from an unknown distribution. Let also $\epsilon,\delta\in\mathbb{R}$,
and $d\in\mathbb{N}$. What can be said about the probability $\Phi(m,d)$
of having more than $d$ adjacent $\tau_{i}$-s pairs that the distance
between them is $\delta$, up to an error $\epsilon$ ? This problem,
that reminds the area of order statistic, shows how the new visualization
paradigm is also an opportunity to develop new areas and problems
in mathematics. 
\end{abstract}
Keywords: Algorithm analysis, event camera, order statistics, pattern
recognition.\\
\\
AMS Subject Classification: 68W40, 68Q25, 68Q87, 60C05.

\section{Introduction}

Event camera is a bio-inspired sensor that does not give information
about its whole field of view, but only about changes in it. Each
pixel of an event camera is asynchronously independent and responds
only when it feels a brightness change that reaches a predefined threshold.
Hence, in contrast to standard cameras that sample the whole field
of view in a certain sampling-rate and output a sequence of synchronous
frames, event cameras provide data only when a certain pixel feels
a predifined change in the illumination, and the output is a list
of asynchronous polarized \textquotedbl events\textquotedbl{} sorted
by their timestamp. Just to make it clear: idealy, if nothing changes
in the field of view, the output of an event camera will be empty,
while the amount of data of a frame-based camera is independent with
changes in the field of view. Event cameras come with some special
properties: timestamp in resolution of microseconds, low latency,
high dynamic range (over 120 dB), and low power consumption (see \cite{key-1,key-2}
for review). 

Regarding missions that require detction of pixel-size fast motions,
event cameras enjoy a system engineering built-in advantage compared
to frame-based camera: while classical cameras need to sample the
whole field of view with high frame-rate and process a lot of redundant
data in order to find the signal, event cameras just ``wait'' for
it, and keep only the relevant information from the signal. For event
camera, it does not matter if the motion is fast or slow, when it
comes, the camera will respond to it. Hence, in such missions, the
new paradigm can naturally be used to surpass the performance of frame-based
cameras.

A classical mission of that type is the detection of high-frequency
periodic signals. Going back to frame-based cameras, the common approach
to distinguish between signals of a given frequency and random flickering
is based on the use of Fourier transform. Using frame cameras, one
needs to sample the signal at a rate higher than double the desired
frequency (due to the Nyquist criterion). This approach works nicely
when the desired signal is not too fast, and not too short. However,
if the frequency of the signal is higher than $\sim$1 kHz, then sampling
it with a frame-based camera and analyzing it with this approach can
become quite cumbersome, and if the signal is too short, say 1 $\mu s$
long, one might miss it altogether, or at least a large portion of
it, due to the inherent camera dead time. We got so used to this approach
that it takes a bit of thinking in order to realize that using the
deep theory of Fourier transform for this mission sounds like using
a 5-kilo hammer in order to knock a nail. 

The asynchronous bio-inspired paradigm of event cameras offers an
approach that sounds much more natural and intuitive: to check whether
the time difference between adjacent events in a certain pixel corresponds
to the temporal-period of the desired frequency. This approach is
inherently more intuitive as it is the way human vision works to detect
repetitive signals. Adapting this approach to an event camera, one
can easily surpass the 1 kHz limit, without missing short signals,
as event-cameras do not have a dead time. This approach was implemented
in \cite{key-3} to show the potential of event cameras to track led
markers blinking at high frequency ($>$ 1 kHz) carried by a drone.
For more applications of this approach see \cite{key-4,key-5}. For
event driven Fourier transform approach see \cite{key-6}. 

As event cameras work differently from frame-based cameras, a main
challenge in unlocking their potential is to develop novel asynchronous
event driven methods and algorithms to process their output (see \cite{key-7,key-8,key-9,key-10,key-11,key-12,key-13,key-14}
for some examples). In this paper, we use the notion \textquotedbl time-surface\textquotedbl{}
and present a new asynchronous event-based algorithm to distinguish
between signals of a given frequency and random flickering, based
on the aforementioned intuitive approach (see \cite{key-15,key-16,key-17,key-18,key-19,key-20}
and \cite{key-1} for more applications of the notion ``time surface''). 

Regarding the work in \cite{key-3}, the algorithm and analysis presented
in this paper have a few advantages. The first is that using the method
in this paper, one can approximate the probability for false alarm,
something that was not dealt in \cite{key-3}, but is important for
real world applications. This emphasises the importance of the presented
probabilistic approach. The second is the simplicity of the method.
With the method presented here, one can consider events of a single
polarity. Specifically, we consider here only positive events. This
is an advantage as in certain situations and for some types of event
cameras, it is not always easy in practice to tune the thresholds
of the camera to achieve good accuracy for both polarities. Regarding
its decision making, another adavntage is that the algorithm does
not go over each pixel and check the signal. The detections list is
generated incidentally through looping over the events. In other words,
only the temporal dimension plays role in running the algorithm and
locating the signal. 

As mentioned in the abstract, the analysis of the algorithm boils
down naturally to the theoretical interesting probabilistic problem
of estimating the probability

\[
\Phi(m,d)=P(\#\{j\,\,|\,\,|\tau_{j+1}-\tau_{j}-\delta|<\epsilon\}\geq d)
\]
where $0\leq\tau_{1}\leq\tau_{2}\leq\cdots\leq\tau_{m}\leq1$ are
originated by an unknown distribution, $\epsilon,\delta\in\mathbb{R}$,
and $d\in\mathbb{N}$. Quite surprisingly, as far as we know, this
problem was not studied in the literature. We say that it is surprising,
as the problem seems to be natural to be asked under the area of order
statistic. This shows the potential of the new paradigm to develop
new areas in pure mathematics.

In the paper, we present an analysis of the problem that can be practical
in certain cases. However, an accurate treatment of the problem stays
open, even in the simple case where the distribution is uniform. Our
analysis uses basic tools in probability theory to approximate the
function $\Phi(m,d)$ by the explicit expression
\begin{align*}
 & Q(m,d)=\sum_{l=d}^{m-1}\binom{m-1}{l}P(m)^{l}(1-P(m))^{m-1-l}\\
\mathrm{where} & \,\,\,\,\,\,P(m)=T^{m+1}\left(\frac{1}{(T+\delta-\epsilon)^{m+1}}-\frac{1}{(T+\delta+\epsilon)^{m+1}}\right).
\end{align*}

\begin{problem}
Is there any explicit formula for $\Phi(m,d)$ when the distribution
is uniform, or under any other assumptions on the distribution? 
\end{problem}

\begin{problem}
Estimate the difference between $\Phi(m,d)$ under certain assumptions
on the distribution, and $Q(m,d)$, or any other explicit approxomation
of $\Phi(m,d)$.
\end{problem}

At the end of the paper, we demonstrate the algorithm performance,
using the presented analysis, and show how its decision making distinguishes
between the periodic signals of streetlights flickering at 100 Hz,
and other random signals during twilight, when many objects in the
field of view are flickering as a result of sun glittering. This example
inspires to use the idea behind the presented algorithm, not only
for detection, but also for flicker removal (see \cite{key-21} for
a different method).

\section{The Output of an Event Camera}

Contrary to a frame-based camera, the output of an event camera does
not consist of a synchronous series of matrices that describe the
gray level of the pixels. Instead, it consists of an asynchronous
list of ``events'' that are generated in the following way. 

Any pixel of the camera ``remembers'' a certian reference value
for the intensity of the light in the pixel. Then, the pixel measures
continuously the change of the intensity of the light with relation
to the reference value. When the intensity of the light changes enough,
and the change reaches a predefined value, the pixel takes two actions:
\begin{enumerate}
\item It updates the reference value to the current intensity value.
\item Reports an ``event'' that contains the following information: 
\begin{enumerate}
\item The timestamp of the change, in reslution of microseconds.
\item The coordinates of the pixel.
\item The polarity of the event, namely, if the event was triggered by a
positive change of the light or a negative one. 
\end{enumerate}
\end{enumerate}
The list of the events, which is the output of the camera, is given
to the user, sorted by the timestamps of the events.

As the output of an event camera is sparse and very different from
the one of a frame-based camera, it requires development of novel
approaches in order to exploit its properties and potential. 

One common approach that can be found in the literature, is to take
the special output of the camera, make artificial frames out of it,
and apply classical algorithms based on the frame-baesd paradigm.
However, this approach, conceptually, will lead to bad results in
a few aspects, as delicate temporal information gets lost when the
frames are made out of the events list, and artificial unnecessary
information of zero values is added to the frames. 

Instead, as mentioned in the introduction, one needs to develop event-based
asynchronous algorithms to process the output. In the following chapter
we are going to present such an algorithm, with the goal of periodic
signals detection.

\section{Description of the algorithm}

As was stated in the introduction, we consider only the positive events
of the output. However, depending on the application and context,
one can take the negative events instead, or consider both polarities.
Therefore, the input of the algorithm is the list of positive events
generated by the event camera 
\[
L=\{(t_{i},x_{i},y_{i})\,\,|\,\,i=1,\ldots,k\}
\]
where $t_{i}$ is the $i$-th event timestamp, $(x_{i},y_{i})$ is
its pixel coordinates, and $k$ is the number of events. We define
the two following variables, which we keep constant along the algorithm: 
\begin{enumerate}
\item $\delta:=$ the period of the signal we are looking for. 
\item $\epsilon:=$ the error we consider in the period of the signal. Typical
value for this variable should be the expected rise time of the signal. 
\end{enumerate}
let $u,v$ be the dimensions of the event camera pixel array. As mentioned
in the introduction, we use the notion of \textquotedbl time surface\textquotedbl .
Namely, in the initialization of the algorithm, we define a $2D$
array of dimensions $u\times v$ which we initialize its entries arbitrarily
to be some negative number, smaller than $-\delta-\epsilon$. We denote
it by $TS$, and we update it along the algorithm to store the last
timestamp of a positive event reported by each of the pixels. In addition,
we define two $2D$ arrays of dimensions $u\times v$, initialized
to zero, which we denote by $Total$ and $Periodic$. Now, using a
for-loop going along $i=1,\ldots,k$, the algorithm implements 4 steps.
Here are the first 3 of them: 
\begin{enumerate}
\item $Total_{x_{i},y_{i}}:=Total_{x_{i},y_{i}}+1$. 
\item If $|t_{i}-TS_{x_{i},y_{i}}-\delta|<\epsilon$, then $Periodic_{x_{i},y_{i}}:=Periodic_{x_{i},y_{i}}+1$. 
\item $TS_{x_{i},y_{i}}:=t_{i}$. 
\end{enumerate}
In order to present the final step, consider a certain pixel $(x,y)$
and denote 
\[
m=Total_{x,y},\qquad n=Periodic_{x,y}
\]
Now, if our recording is $T$ seconds long, and the events reported
by the pixel are due to the desired periodic signal, we expect $n$
to get close to $\frac{T}{\delta}$ during the for-loop, or at least
to be big enough so it will be unlikely to relate it to a random flickering
signal. How big should it be? This is exactly what we are analyzing
below. To formulate the problem, let $0\leq\tau_{1}\leq\tau_{2}\le\cdots\leq\tau_{m}\leq T$
originated from an unknown distribution. For $d\in\mathbb{N}$, denote
the probability 
\begin{equation}
\Phi(m,d)=P(\#\{j\,\,|\,\,|\tau_{j+1}-\tau_{j}-\delta|<\epsilon\}\geq d)\label{eq:Phi}
\end{equation}
Then, in general, $\Phi$ is monotonically decreasing as a function
of $d$, $\Phi(m,0)=1$, and $\Phi(m,m)=0$. Now, let $q$ be the
allowed probability for false alarm in one pixel. We note that in
general, $q$ should be much smaller than the allowed probability
for false alarm in the whole field of view. Then, there exists a minimal
$D$ such that $\Phi(m,D)\leq q$. Now, let $0\leq s_{1}\leq s_{2}\leq\cdots\leq s_{m}\leq T$
be the timestamps of the events reported by the pixel $(x,y)$. Then
\[
n=Periodic_{x,y}=\#\{j\,\,|\,\,|s_{j+1}-s_{j}-\delta|<\epsilon\}.
\]
We would like to say that the signal reported by the pixel $(x,y)$
is unlikely to relate to a random flickering if $n\geq D$, which
is equivalent to the condition 
\[
\Phi(m,n)\leq q.
\]

Denote another $2D$ array of dimensions $u\times v$, initialized
to zero, by $Detected$, to indicate, for each pixel, whether, considering
the information processed by the for-loop so far, the pixel detected
the signal. Now, we can present the final step of the for-loop:
\begin{itemize}
\item If $\Phi(Total_{x,y},Periodic_{x,y})\leq q$ and $Detected_{x,y}=0$,
then add the pixel $(x,y)$ to the list of detections and update the
value of $Detected_{x,y}$ to $1$.
\item If $\Phi(Total_{x,y},Periodic_{x,y})>q$ and $Detected_{x,y}=1$,
then remove the pixel $(x,y)$ from the list of detections and update
the value of $Detected_{x,y}$ to $0$.
\end{itemize}
We emphasis that this final step can be written as a forth step in
the for-loop itself and does not need to come afterwards. This way,
the algorithm does not need to look for the signal in each pixel after
the for-loop is done: the detections are found incidentally trough
the temporal event loop. The output of the algorithm is then a list
of pixels in which a periodic signal of the desired frequency has
been detected, considering all the events in the pixels during the
$T$-seconds recording.

We do not know about accurate estimations of $\Phi(m,d)$ in the literature,
even in simple cases, e.g. when the events are uniformly distribution
originated. However, in the following we suggest a practical way to
approximate its values in order to complete the decision making of
the algorithm. 

\section{Suggested analysis}

Assume that $0\leq\tau_{1}\leq\tau_{2}\le\cdots\leq\tau_{m}\leq T$
are unknown distribution originated. We start with estimating the
probability for the distance between two arbitrary subsequent $\tau_{i}$-s
to be close to $\delta$ up to an error $\epsilon$. For this estimation,
we assume that the appearance of the $\tau_{i}$-s is exponentially
distributed with a parameter $\lambda$. Under this assumption, and
the assumption $\epsilon\leq\delta$, the probability we are looking
for is given by the formula
\[
\tilde{P}(\lambda)=\int_{\delta-\epsilon}^{\delta+\epsilon}\lambda e^{-\lambda s}\cdot ds=(e^{\lambda\epsilon}-e^{-\lambda\epsilon})e^{-\lambda\delta}.
\]
However, as we do not know the value of $\lambda$, but we do know
that between $0$ and $T$ we have $m$ $\tau_{i}$-s, we are weighting
this probability by the probability of having $m$ events given a
Poisson distribution with parameter $\rho=\lambda T$. Hence, we estimate
the desired probability by the formula 
\begin{align*}
P & =\text{\ensuremath{\int}}_{0}^{\text{\ensuremath{\infty}}}\tilde{P}(\lambda)\cdot\frac{\rho^{m}e^{-\rho}}{m!}\cdot d\rho\\
 & =T\text{\ensuremath{\int}}_{0}^{\text{\ensuremath{\infty}}}\frac{(\lambda T)^{m}(e^{-\lambda(T+\delta-\epsilon)}-e^{-\lambda(T+\delta+\epsilon)})}{m!}\cdot d\lambda\\
 & =T^{m+1}\left(\frac{1}{(T+\delta-\epsilon)^{m+1}}-\frac{1}{(T+\delta+\epsilon)^{m+1}}\right).
\end{align*}
where the latter equality is yields by using repeatedly the method
of integration by parts. 
\begin{rem}
Expending $P$ and $\tilde{P}$, one can see that if $\delta\ll T$,
then $\tilde{P}\left(\frac{m+1}{T}\right)$ gives a good approximation
for $P$. This is not far from the intuitive sense that in high probability,
the parameter $\lambda$ is close to the value $\lambda=\frac{m}{T}$.
Indeed, under the aforementioned assumptions we have

\begin{align*}
P & =\frac{1}{\left(1+\frac{\delta-\epsilon}{T}\right)^{m+1}}-\frac{1}{\left(1+\frac{\delta+\epsilon}{T}\right)^{m+1}}\\
 & =\sum_{k=0}^{\infty}\frac{(m+k)!}{(k!\cdot m!)}\left(\left(\frac{-\delta+\epsilon}{T}\right)^{k}-\left(\frac{-\delta-\epsilon}{T}\right)^{k}\right)\\
 & \approx(m+1)\cdot\frac{2\epsilon}{T}+(m+2)(m+1)\cdot\frac{2\epsilon\delta}{T^{2}}\\
\\
\tilde{P}\left(\frac{m+1}{T}\right) & =e^{\frac{m+1}{T}(\epsilon-\delta)}-e^{\frac{m+1}{T}(-\epsilon-\delta)}\\
 & =\sum_{k=0}^{\infty}\frac{(m+1)^{k}}{k!}\left(\left(\frac{-\delta+\epsilon}{T}\right)^{k}-\left(\frac{-\delta-\epsilon}{T}\right)^{k}\right)\\
 & \approx(m+1)\cdot\frac{2\epsilon}{T}+(m+1)^{2}\cdot\frac{2\epsilon\delta}{T^{2}}.
\end{align*}
\end{rem}

Notice now that given the information of having $m$ events reported
by the pixel, the probability for two adjacent events of the pixel
to be close to each other in the appropriate distance is not independent.
However, for the sake of simplicity we do not take into account this
fact. Hence, the probability of at least $d$ adjacent events out
of $M=m-1$ to be close to each other in the appropriate distance
can be approximated by 
\[
Q(m,d)=\sum_{l=d}^{m-1}\binom{m-1}{l}P(m)^{l}(1-P(m))^{m-1-l}.
\]
In the sequent, we use $Q(m,d)$ as an approximation for the desired
$\Phi(m,d)$ presented in Equation \ref{eq:Phi}. 

It is obvious that just like $\Phi(m,d)$, for any $m$, $Q(m,d)$
is monotonically decreasing as function of $d$. However, the dependence
of $Q(m,d)$ on $m$ is more tricky as the parameter $m$ influences
$Q$ in two opposite directions. From one hand, when $m$ grows it
means that there are more events, and hence it should be easier to
reach the threshold $d$, so $Q$ should grow. On the other hand,
when $m$ grows, the event rate grows as well, and hence the chance
for a specific random event to be far from the previous one in the
right interval gets smaller, so $P$ decreases, and hence $Q$ should
decrease also. This observation is summarized in the following proposition,
showing that the function $Q$ does encode these two opposite effects. 
\begin{prop}
\label{prop:lim}Given a specific value for the variables $\delta$,
$\epsilon$, $T$, $d$, with $\epsilon\leq\frac{\delta}{2}$ and
$d>0$, one has:

\[
\lim_{m\to0}Q(m,d)=\lim_{m\to\infty}Q(m,d)=0.
\]
In particular, Q reaches a maximum value as a function of m. 
\end{prop}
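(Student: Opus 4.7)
The plan is to treat the two limits separately, exploiting the fact that $Q(m,d)$ is exactly the tail probability $\Pr[X \geq d]$ for a binomial random variable $X \sim \mathrm{Binomial}(m-1, P(m))$. The limit $m \to 0$ is handled by simply counting the summation range, while $m \to \infty$ is handled by Markov's inequality together with an exponential-decay estimate on $P(m)$.

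For the first limit, since $d \geq 1$, whenever the integer $m$ satisfies $m \leq d$, the index range $l = d, d+1, \ldots, m-1$ in the definition of $Q$ is empty, so $Q(m,d) = 0$ by convention. Hence the quantity is identically zero for all sufficiently small integer values of $m$, and the limit as $m \to 0$ is $0$.

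For the second limit, the key is to rewrite $P(m)$ in the simpler form obtained by distributing $T^{m+1}$:
\[
P(m) = a^{m+1} - b^{m+1}, \qquad a := \tfrac{T}{T+\delta-\epsilon}, \quad b := \tfrac{T}{T+\delta+\epsilon}.
\]
The assumption $\epsilon \leq \delta/2$ ensures $\delta - \epsilon > 0$, so $0 < b < a < 1$, and in particular $P(m) \leq a^{m+1}$ decays exponentially in $m$. Markov's inequality applied to $X \sim \mathrm{Binomial}(m-1, P(m))$ then gives
\[
Q(m,d) = \Pr[X \geq d] \;\leq\; \frac{\mathbb{E}[X]}{d} \;=\; \frac{(m-1)P(m)}{d} \;\leq\; \frac{(m-1)\,a^{m+1}}{d},
\]
and the right-hand side tends to $0$ as $m \to \infty$ because polynomial growth is dominated by exponential decay with base $a < 1$.

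The ``in particular'' clause then follows at once: for any integer $m$ with $m > d$ the sum defining $Q$ is non-empty with strictly positive summands (since $0 < P(m) < 1$), so $Q(m,d)$ is strictly positive on an infinite set of integers while vanishing at both ends, forcing a maximum to be attained. The only substantive step is the exponential bound on $P(m)$, and I do not expect a serious obstacle there; the hypothesis $\epsilon \leq \delta/2$ is exactly what is needed to guarantee $a < 1$, and everything else is routine bookkeeping.
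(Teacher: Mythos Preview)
Your proof is correct and lands on the same core estimate as the paper: both arguments bound $Q(m,d)$ by something of order $m\,P(m)$ and then use $P(m)\le a^{m+1}$ with $a=T/(T+\delta-\epsilon)<1$ (guaranteed by $\epsilon\le\delta/2$) to kill the $m\to\infty$ limit. The differences are in packaging. The paper obtains $Q(m,d)\le m\,P(m)$ by pulling one factor of $P$ out of the binomial sum, reindexing, and bounding the leftover sum by $Q(m-1,d-1)\le 1$; you get the slightly sharper $(m-1)P(m)/d$ in one line via Markov's inequality on $X\sim\mathrm{Bin}(m-1,P(m))$. For the $m\to 0$ end, the paper reuses the same inequality $Q\le m\,P(m)$ and lets $m\to 0$, whereas you simply observe that the summation range is empty once $m\le d$. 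Your route is a bit cleaner and makes the probabilistic content explicit; the paper's combinatorial manipulation has the minor advantage of yielding a single bound that handles both limits at once.
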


\begin{proof}
Writing $P=P(m)$ and using the assumption $d>0$, one has
\begin{align*}
Q(m,d) & =\sum_{l=d}^{m-1}\left(\begin{array}{c}
m-1\\
l
\end{array}\right)P^{l}(1-P)^{m-1-l}\\
 & =P\cdot\sum_{l=d}^{m-1}\left(\begin{array}{c}
m-1\\
l
\end{array}\right)P^{l-1}(1-P)^{m-1-l}\\
 & =P\cdot\sum_{l=d-1}^{m-2}\frac{m-1}{l+1}\left(\begin{array}{c}
m-2\\
l
\end{array}\right)P^{l}(1-P)^{m-2-l}\\
 & \leq P\cdot m\cdot\sum_{l=d-1}^{m-2}\left(\begin{array}{c}
m-2\\
l
\end{array}\right)P^{l}(1-P)^{m-2-l}\\
 & =P\cdot m\cdot Q(m-1,d-1)\leq P\cdot m\\
 & =m\cdot T^{m+1}\left(\frac{1}{(T+\delta-\epsilon)^{m+1}}-\frac{1}{(T+\delta+\epsilon)^{m+1}}\right)\overset{m\to0}{\longrightarrow}0.\\
\end{align*}
As we assume that $\epsilon\leq\frac{\delta}{2}$, we have $\delta-\epsilon>0$,
and hence we also have
\begin{align*}
Q(m,d) & \leq\frac{m\cdot T^{m+1}}{(T+\delta-\epsilon)^{m+1}}\\
 & =\frac{m}{\left(1+\frac{\delta-\epsilon}{T}\right)^{m+1}}\overset{m\to\infty}{\longrightarrow}0.
\end{align*}
\end{proof}

\section{Demonstration}

In general, the algorithm has better performance with detecting fast
rising periodic signals, as in these cases one can choose the error
parameter $\epsilon$ of the algorithm to be relatively small. However,
in order to feel its effectiveness lower bound, we demonstrate it
here on the periodic signal of a streetlight powered by sinusoidal
current of the electrical grid. In this case, the rising time of the
signal is basically half of the period. Hence, if we do not want to
miss any of the events related to the periodic signal, we need to
choose $\epsilon$ to its highest reasonable value, namely $\epsilon=\frac{\delta}{2}$.
In this case, as the frequency of the signal is $100$ Hz, we have
$\delta=10$ ms, and hence $\epsilon=5$ ms. 

Our experimental setup is built up of two cameras staring on approximately
the same urban view: Prophesee Gen4 event-camera with resolution of
1M pixel, and a CMOS frame camera for guidance. Figure \ref{fig:picture}
shows the intersection of the two cameras field of view, seen by the
frame camera. This intersection covers the field of view of approximately
$1000\times600$ pixels of the event camera. 

In Figure \ref{fig:picture} we point out some well seen sun glittering
coming from the buildings on the mountains, but there are also some
intensive pixel size challenging sun glittering signals from the closer
urban view that cannot be easily seen in the picture. We also point
out the location of a streetlight. 

\begin{figure}[H]
\begin{centering}
\includegraphics[width=3.5in]{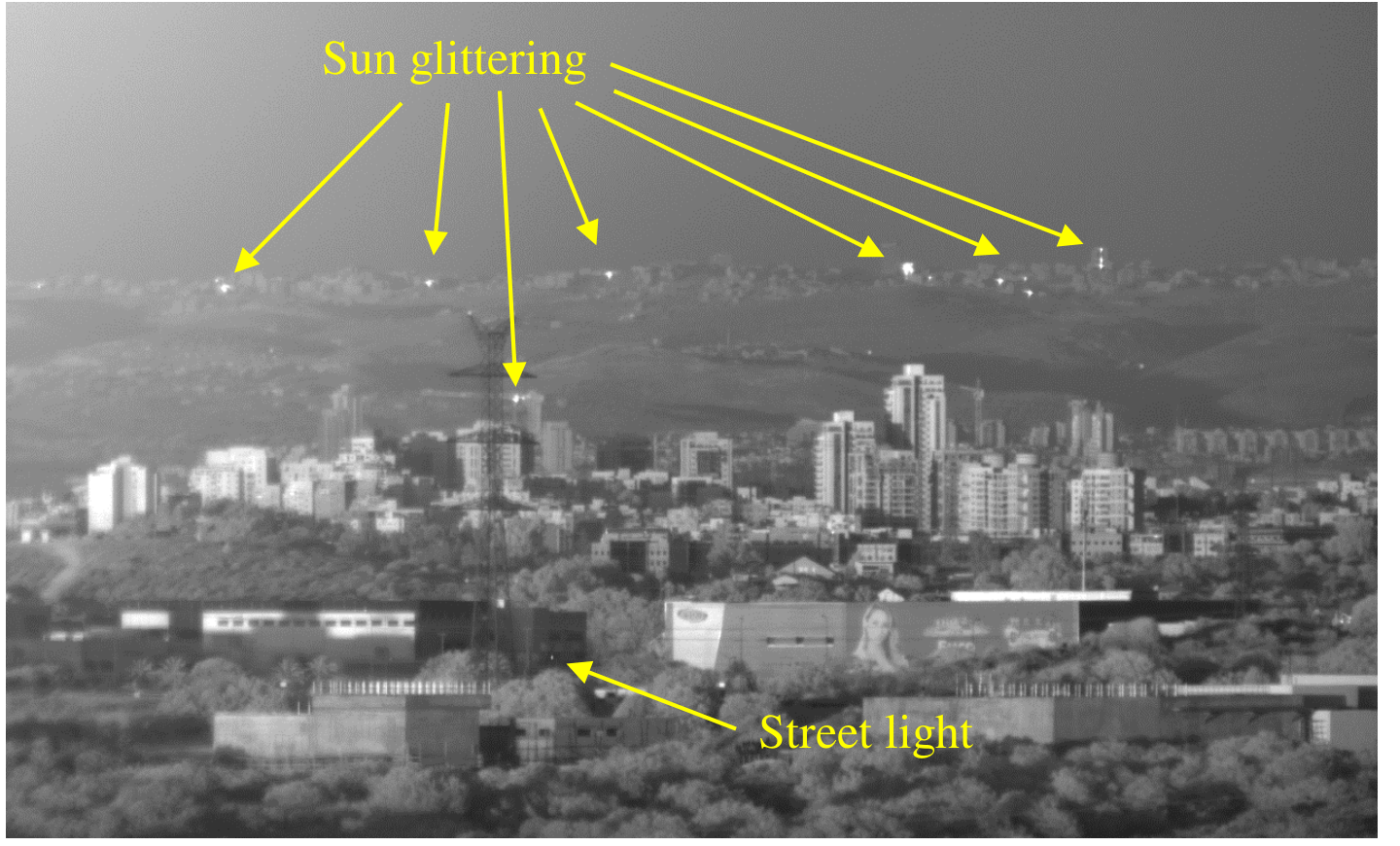}
\par\end{centering}
\caption{\label{fig:picture}The intersection of the frame camera and event
camera field of view. }
\end{figure}

Now, in order to demonstrate how the algorithm distinguishes between
the periodic signal of the streetlight, and all the other events,
we recorded one second of the scene, i.e. $T=1\,\,s$. Then, we applied
the algorithm on the recording, and evaluated the function $Q(m,n)$
for each pixel. 

The logarithmic graph in Figure \ref{fig:graph} shows the number
of pixels in the intersection field of view that reported more than
5 events during the recording, for which the probability function
$Q(m,n)$ is smaller than the value of the allowed probability for
false alarm given in the horizontal axis. In addition, it shows how
many pixels out of them, are related to the periodic signal of the
streetlight. 

\begin{figure}[H]
\centering{}\includegraphics[width=3.5in]{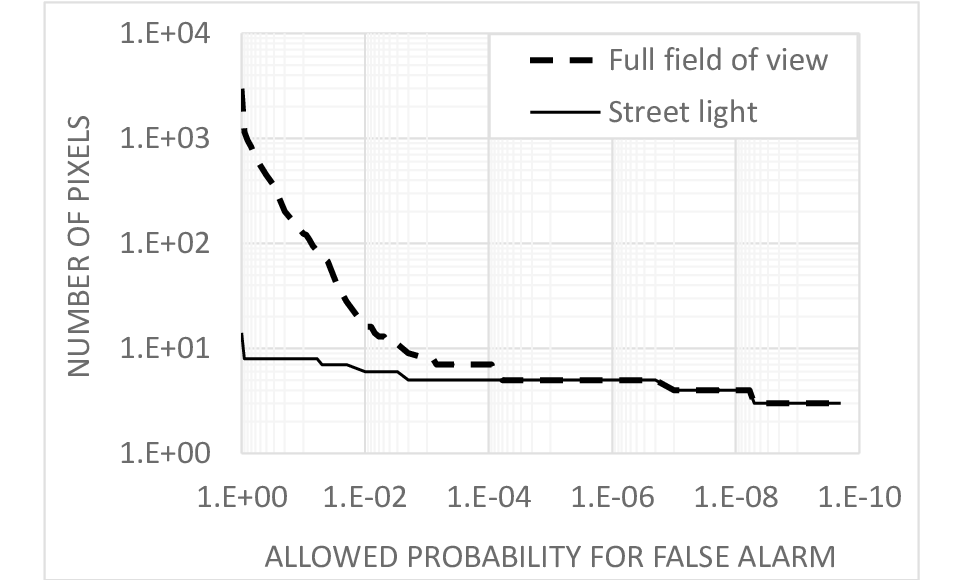}\caption{\label{fig:graph}Number of pixels with probability function value
smaller than the allowed probability for false alarm. }
\end{figure}

The graph shows that when the probability for false alarm is smaller
than $10^{-5}$, we remain with only 5 pixels that have enough events
in the output of the algorithm, and all of them are related to the
streetlight. Moreover, the value of the probability function of these
5 pixels is at least 2 orders of magnitude less than $10^{-5}$. In
other words, there is a very clear dichotomy between the probability
function of these 5 pixels and the one of the other pixels. Hence,
these 5 pixels are very well distinguished by the algorithm and the
analysis presented here.

One can also see in Figure \ref{fig:graph} that there were additional
9 pixels that have reported more than 5 events during the recording,
and their events are related to the streetlight, but were flickering
more weakly, and hence the algorithm could not distinguish them effectively
from other random signals. In Table 1 we give the values of the variables
that were involved in evaluating the probability function of all the
pixels that responded to the flickering street light: the 5 distinguished
pixels and the other 9 with weak signal, sorted according to the value
of the probability function. 

\begin{table}[H]
\begin{centering}
\begin{tabular}{|c|c|c|c|}
\hline 
Pixel & $m$ & $n$ & $Q$\tabularnewline
\hline 
\hline 
1 & 554 & 99 & $\sim10^{-16}$\tabularnewline
\hline 
2 & 506 & 98 & $\sim2\cdot10^{-16}$\tabularnewline
\hline 
3 & 160 & 98 & $\sim3\cdot10^{-11}$\tabularnewline
\hline 
4 & 376 & 99 & $\sim6\cdot10^{-9}$\tabularnewline
\hline 
5 & 342 & 99 & $\sim10^{-7}$\tabularnewline
\hline 
6 & 132 & 66 & $\sim2\cdot10^{-3}$\tabularnewline
\hline 
7 & 188 & 76 & $\sim2\cdot10^{-2}$\tabularnewline
\hline 
8 & 89 & 41 & $\sim5\cdot10^{-2}$\tabularnewline
\hline 
9 & 26 & 2 & $\sim1$\tabularnewline
\hline 
10 & 80 & 20 & $\sim1$\tabularnewline
\hline 
11 & 96 & 24 & $\sim1$\tabularnewline
\hline 
12 & 22 & 0 & $\sim1$\tabularnewline
\hline 
13 & 10 & 0 & $\sim1$\tabularnewline
\hline 
14 & 9 & 0 & $\sim1$\tabularnewline
\hline 
\end{tabular}
\par\end{centering}
\caption{Variables of the pixels reacting to the signal of the streetlight.}

\end{table}

\begin{rem}
By analyzing the periodic signals more carefully, one can find a better
value for the error parameter $\epsilon$, say $\epsilon=3$ ms, and
be able to add pixels 6-8 in Table 1 to the list of the distinguished
pixels. We also remark that obviously, analyzing a longer recording
can help to detect these pixels as well.
\end{rem}

\begin{rem}
Concentrating on the $5$ distinguished pixels, one can see that for
all of them we have $98\leq n\leq99$ which is very close to the expected
number $\frac{T}{\delta}=100$, but $m$ varies between $160\leq m\leq554$.
The reason $m$ is much higher than the expected number of events
in the pixel, is that in each of the periods, the rise of the signal
triggered more than one event: as powerful as the signal was, more
events were triggered in each of the periods. Now, focusing on the
relation between $m$ and $Q$, one can see that $Q$ reached a higher
value when $m=342$ (Pixel 5) than in the case $m=160$ (Pixel 3).
But after that, when we move from Pixel $5$ to Pixel $4$ and so
on, when $m$ grows, $Q$ becomes smaller. This phenomenon demonstrates
the validation of Proposition \ref{prop:lim}.\\
\\
Ben-Ezra, David El-Chai, Remote Sensing Department, Soreq NRC, Yavne,
Israel, 81800 (dbenezra@mail.huji.ac.il)\\
\\
Arad, Ron, Remote Sensing Department, Soreq NRC, Yavne, Israel, 81800
(fnarad@yahoo.com)\\
\\
Padowicz, Ayelet, Remote Sensing Department, Soreq NRC, Yavne, Israel,
81800 (ayeletp@soreq.gov.il)\\
\\
Tugendhaft, Israel, Remote Sensing Department, Soreq NRC, Yavne, Israel,
81800 (tugen@soreq.gov.il) 
\end{rem}

\end{document}